\documentclass[11pt]{article}

\usepackage[utf8]{inputenc}
\usepackage[T1]{fontenc}
\usepackage[margin=1in]{geometry}
\usepackage{amsmath,amssymb,amsfonts}
\usepackage{graphicx}
\usepackage{booktabs}
\usepackage{array}
\usepackage{xcolor}
\usepackage{caption}
\usepackage{siunitx}
\usepackage{enumitem}
\usepackage{authblk}
\usepackage{multirow} 
\usepackage{makecell}
\usepackage{amsthm}

\usepackage{algorithm}
\usepackage{algpseudocode}
\usepackage{parskip}
\usepackage[colorlinks=true,linkcolor=blue,citecolor=blue,urlcolor=blue]{hyperref}
\usepackage{cleveref}

\graphicspath{{figures/}}

\newcommand{\xpresto}{\textsc{xPress}}
\newcommand{\R}{\mathbb{R}}

\newtheorem{proposition}{Proposition}
\newtheorem{remark}{Remark}
\newtheorem{theorem}{Theorem}
\title{\textbf{\xpresto}: Parallel Refinement for Diffusion Drafters\\ in Speculative Decoding}

\author[1]{Zheng Wang}
\author[2]{Davis Wertheimer}
\author[2]{Yu Chin Fabian Lim}
\author[2]{Mudhakar Srivatsa}
\author[2]{Raghu K. Ganti}
\author[1]{Minjia Zhang}
\author[2]{Naigang Wang}
\affil[1]{University of Illinois Urbana-Champaign}
\affil[2]{IBM}

\date{}

\begin{document}
\maketitle

\begin{abstract}
\noindent Block-diffusion drafters like dFlash generate an entire block of draft tokens in a single forward pass, drastically reducing the overhead of multiple-token drafting in speculative decoding. The crucial final step of the single-pass discrete denoising process involves using the logit distribution at each position to sample conditionally independent tokens. The resulting draft is thus a set of per-position marginals, rather than a joint distribution: no draft token is guaranteed to depend on its predecessors. 
Such independently sampled marginals tend to produce sequences with tokens that are individually likely, but jointly improbable under the target model's distribution, which verifies each token conditionally. This can cause early rejection and limits acceptance length. To address this, we propose \textbf{\xpresto{}} as a means to restore the missing causality in diffusion drafters. \xpresto{} is a lightweight causal refiner that reconciles the whole diffusion block at once through parallel refinement, restoring and propagating causal dependencies across the draft without a token-by-token loop. On Qwen3-8B, across seven math, code, and chat benchmarks, \xpresto{} raises \textbf{acceptance length by ${\sim}30\%$ on average (up to $+56\%$)} and its \textbf{end-to-end decoding throughput by ${\sim}1.3\times$ on average (up to $1.7\times$)} compared to the original dFlash diffusion drafter.
\end{abstract}

\begin{figure}[t]
  \centering
  \includegraphics[width=\linewidth]{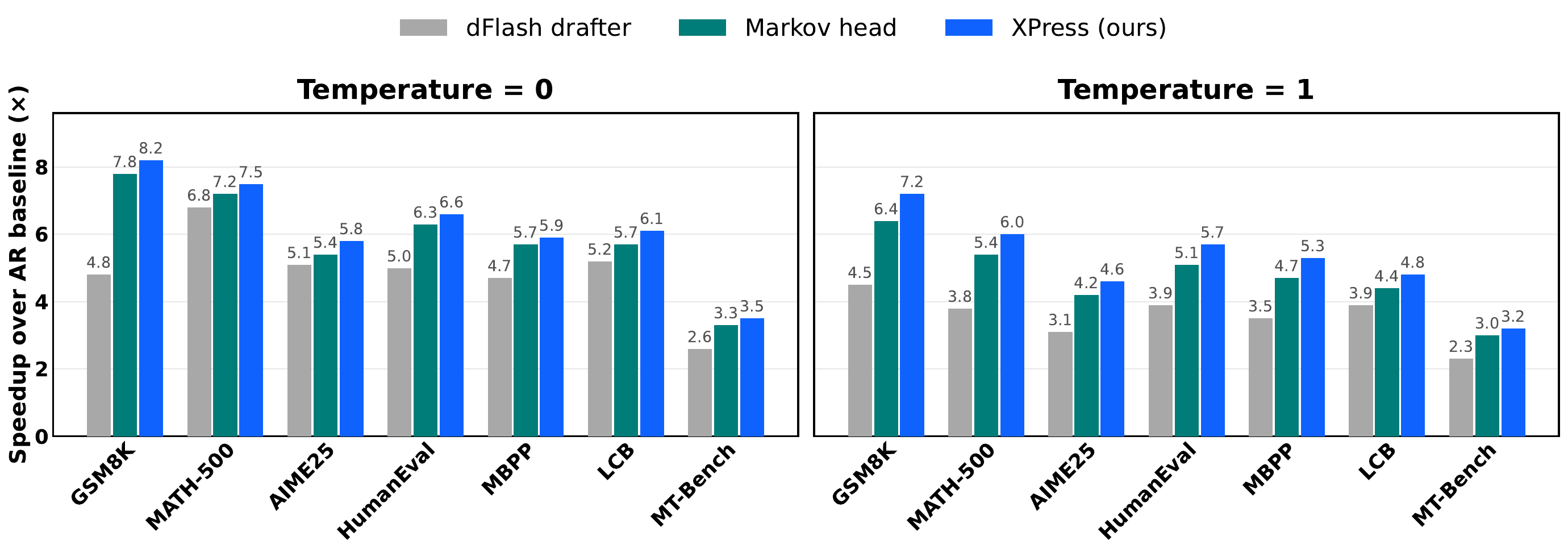}
  \caption{\textbf{End-to-end decoding throughput speedup over the autoregressive baseline} on Qwen3-8B with a block-16 dFlash drafter under greedy decoding ($T{=}0$, left) and lossless sampling ($T{=}1$, right). The throughput speedup values are measured on H200.  \xpresto{} consistently outperforms the Markov head and dFlash diffusion drafter.}
\label{fig:speedup-t0-t1}
\end{figure}
\section{Introduction}
Speculative decoding (SD)~\cite{chen2023acceleratinglargelanguagemodel, leviathan2023} accelerates autoregressive generation by using a lightweight draft model to propose future tokens, which the larger target model verifies in one parallel forward pass. 
A single multi-token verification pass costs about the same as a standard single-token target-model decoding step in low concurrency scenarios, so every draft token that matches target model outputs is another token produced at no additional target cost. The speedup achieved by SD is mainly governed by two factors: the acceptance length (the number of drafted tokens the target accepts per verification step) and the cost of the drafting process itself. A larger acceptance length amortizes each target verification pass over more generated tokens, while a cheaper drafter reduces the overhead paid to produce them. This forms a natural trade-off: we want to maximize the expected acceptance length of the drafter, while avoiding a proportional increase to drafting overhead.

\noindent Autoregressive (AR) generation has long been the default approach to drafting models, and the \textbf{EAGLE} series~\cite{eagle2024,li2024eagle2fasterinferencelanguage,li2025eagle3scalinginferenceacceleration} is one of the most representative AR-drafting SD methods. EAGLE's drafter is remarkably lightweight, as small as a single layer, yet it yields high-quality drafts. However, because drafting is autoregressive, generating $n$ draft tokens involves $n$ sequential forward passes of the draft model. As $n$ becomes larger, the drafting overhead becomes increasingly pronounced, but acceptance length, which requires an unbroken chain of accepted verifications, does not. \textbf{dFlash}~\cite{dflash2026} resolves this via a block-diffusion model that proposes an entire block of draft tokens in a single forward pass. By turning $n$ serial steps into a single parallel one, dFlash enables longer drafts (and downstream speedups for the target model) at near-constant overhead.

\noindent But the parallelism brought by the diffusion drafter carries an inherent limitation in accuracy. Unlike an AR drafter, where each position is conditioned on the preceding tokens, positions in a diffusion drafter are decoded from their marginal distributions. The token $k$ is drawn without seeing what the token $k{-}1$ turned out to be, so the block is a set of individually plausible tokens with no guarantee of causality. 
This can violate natural linguistic dependencies across positions, 
even when every token is locally high-probability~\cite{wu2025fastdllmv2efficientblockdiffusion,liu2025tidarthinkdiffusiontalk,presto2026}. For example, a drafter predicting each position independently can put a plural verb after a singular subject and produce ``she are,'' where each word is fine on its own but the verb contradicts the subject. At verification, these locally reasonable but jointly incoherent samples are rejected early by the left-to-right target model, limiting the achievable acceptance length.

\noindent One line of prior work tries to address this limitation by constructing a draft token tree. Tree-based drafting proposes a tree of candidate continuations and verifies the whole tree in a single target pass, so that the longest accepted path through the tree can be selected. Recent works, like PRESTO~\cite{presto2026} and DDTree~\cite{ddtree2026}, have demonstrated that tree drafting can effectively enhance the achievable acceptance length of diffusion-drafter-based SD methods. Nevertheless, tree drafting has real limitations. Fundamentally, it hedges around the non-causality of the diffusion drafter by targeting recall rather than accuracy. On top of that, the required sparse, irregular tree attention is expensive and complex to serve. Moreover, because a candidate tree typically spans tens to hundreds of tokens, its gains fade quickly at large batch sizes, where the target pass is already compute-bound and those extra tokens are no longer free. Therefore, we ask the following research question:

\emph{Can we cheaply inject causal information into the diffusion drafter at training time, so as to increase acceptance length and raise end-to-end decoding throughput without sacrificing its parallel drafting?}

\noindent What makes this plausible is a property already established for diffusion drafters: the correct token is frequently among the drafter's top-$k$ candidates at each position~\cite{presto2026}. The drafter's block-level marginals already narrow each position to a small candidate set, it just lacks the causal information needed to identify the right token within that set. The problem of correction therefore reduces to picking the right token from a narrow preexisting set. Therefore, we formalize this as a logits refinement problem and address it with \xpresto{}, a lightweight causal refiner that injects causal information into the diffusion drafter. Specifically, \xpresto{} keeps the diffusion drafter's parallel block proposal intact and reranks it in place: a small head reads the drafter's block hidden states together with the preceding tokens, and emits a per-position additive bias on the diffusion drafter's own logits. Naively, such causal refinement would reintroduce the very autoregressive dependency that diffusion drafting removes: each position conditions on the previous one, forcing a sequential sweep over the block. To avoid this, \xpresto{} resolves the refinement with Jacobi decoding: all positions are updated in parallel from the previous iterate, and $K\!\ll\!B$ such passes, where $B$ is the diffusion block size, suffice to reach the sequential fixed point in practice. We summarize the contributions of \xpresto{} as follows:

\begin{itemize}[leftmargin=*]
\item \textbf{Causal dependency injection at near-zero cost.} \xpresto{} inject causal information to diffusion drafter with a deliberately minimal mechanism: token embeddings and drafter hidden states are projected into a low-rank $r$-space, where a light-weight single \emph{strictly lower-triangular} linear mixer lets every position aggregate information from all of its predecessors. The resultant logits bias are then added to the diffusion drafter's own logits to do the causal refinement.
\item \textbf{Parallel refinement.} \xpresto{} resolves the refiner's causal dependency with $K$ parallel Jacobi iterations over the whole block instead of a conventional auto-regressive left-to-right decoding process. Empirically, these iterations converge in far fewer steps than the
diffusion block length ($K \ll B$), so a handful of batched forwards reproduce the effect of $B$ dependent auto-regressive steps while retaining the decoding parallelism.
\item \textbf{Consistent end-to-end gains.} On the Qwen3-8B target with the
dFlash drafter, \xpresto{} enhances acceptance length by 18--56\% under greedy decoding and by 19--58\% under $T{=}1$ sampling, across seven
math, coding, and chat benchmarks compared to the dFlash's diffusion drafter. The gains convert into end-to-end
decoding speedup: ${\sim}1.3\times$ over the drafter on average (up to
$1.7\times$) at $T{=}0$, growing to ${\sim}1.46\times$ on average (up to
$1.6\times$) at $T{=}1$ as shown in Figure ~\ref{fig:speedup-t0-t1}.
\end{itemize}

\section{Related Works}
\label{sec:related-work}
\subsection{Diffusion drafters}
A recent line of speculative decoding work pairs diffusion based parallel drafting with
autoregressive verification. One direction makes the drafter itself stronger: TiDAR~\cite{liu2025tidarthinkdiffusiontalk} trains a hybrid model that performs quadratic self-speculation to achieve parallel decoding, although its final generations are not lossless. DiffuSpec~\cite{li2025diffuspecunlockingdiffusionlanguage} and
SpecDiff-2~\cite{sandler2025specdiff2scalingdiffusiondrafter} employ large pretrained dLLMs as speculative drafters, with inference-time search or train–test
alignment to improve acceptance. However, these approaches reply on massive drafters (e.g., 7B parameters), which incur
substantial memory and latency overhead. On the contrary, dFlash~\cite{dflash2026} harnesses a small block-diffusion drafter conditioned on context
features of the target fills an entire block in one forward pass, reporting
over $6\times$ lossless acceleration. More recently, Nemotron-Labs-Diffusion~\cite{fu2026nemotronlabsdiffusiontrimodelanguagemodel} also demonstrates impressive decoding speedup in linear self-speculation decoding mode where the model itslef can serve as a diffuison drafter as well as an AR target. A second direction improves how the drafter's marginals are spent at
verification stage. Because a diffusion drafter independently predicts a full distribution at every position in parallel, each position carries several plausible candidates, and committing to a single linear draft discards this breadth. PRESTO~\cite{presto2026}, DDTree~\cite{ddtree2026}, and JetSPec~\cite{hu2026jetspecbreakingscalingceiling} instead expand the per-position candidate space into a draft tree, enumerating multiple
continuations per position and verifying them jointly in one target pass, so that a mismatch at a single position no longer forces the entire block to be rejected. This raises the acceptance length without modifying or retraining the drafter. \xpresto{} is orthogonal to both directions: it
neither enlarges the drafter nor changes verification, but repairs the marginal distribution itself, injecting the intra-block causal
conditioning that one-pass diffusion drafting omits. Because it operates on
any block drafter's hidden states and logits, it composes with lighter or
heavier drafters alike, and tree methods built on refined logits inherit a
better backbone.

\subsection{Causal information injection for diffusion drafters}
Closest to us, two concurrent works, Domino~\cite{domino2026} and
DSpark~\cite{dspark2026}, share our goal of restoring causality to a diffusion
drafter by attaching a correction head that makes a position's logits depend
on the tokens before it. Domino walks the block left to right with a GRU,
carrying a recurrent state and emitting a per-position logit correction.
DSpark sheds the GRU's overhead with an even lighter Markov head. Both heads restore causal structure, but
both pay for it in ways \xpresto{} does not. First, both are serial:
Domino's GRU passes state position to position, and DSpark's bias for token
$k$ is indexed by the sampled id of token $k{-}1$, so a block of $B$ tokens
takes $B{-}1$ steps that must run in order. Second, both feed the correction narrow inputs.
DSpark conditions on a single previous token id, repairing local two-token
clashes but blind to the rest of the block and to the drafter's hidden
states, which are precomputed representations that encode far more than any token
id~\cite{truth2026}. Domino's GRU state carries more history, but still never
sees the whole block at once. \xpresto{} mixes over positions explicitly and,
despite its causal output structure, reads the entire block's drafter hidden
states and resolves
its causal dependency with $K$ parallel Jacobi passes.

\section{Background}
\label{sec:background}

\subsection{Speculative decoding}
\label{sec:bg-spec}

Speculative decoding (SD) accelerates the decoding process of a target model by pairing it with a cheap drafter. Each round, the drafter proposes a window of future tokens. The target scores the whole window in one forward pass and accepts the longest acceptable prefix plus one bonus token of its own. Under greedy decoding the accpeted
tokens are identical to standard decoding, and under sampling at temperature $T>0$, rejection sampling against the draft distribution preserves the target distribution exactly. Let $\tau$ denote the expected number of tokens accepted per SD step,
the average per-token latency is
\begin{equation}
  \mathcal{L} \;=\; \frac{T_{\mathrm{draft}} + T_{\mathrm{verify}}}{\tau},
  \label{eq:latency}
\end{equation}
so speedup over the target's autoregressive latency mainly comes from two levers:
raising the acceptance length $\tau$, or lowering the drafting cost
$T_{\mathrm{draft}}$~\cite{dflash2026}. 

\subsection{Block-diffusion drafters}
\label{sec:bg-diffusion}

Block-diffusion drafters decouple the two levers. Let $c$ denote the
verified context (the accepted prefix so far) and consider a draft block
of $B$ positions indexed by $t \in \{1,\dots,B\}$, with $x_t$ the token at
position $t$, $x_{<t}$ its realized prefix within the block, and $|V|$ the
vocabulary size. Conditioned on $c$, the drafter consumes a masked block
and fills all $B$ positions in a single bidirectional forward pass,
producing logits $\ell_t \in \mathbb{R}^{|V|}$ for every position $t$.
Drafting cost is a single parallel forward pass, independent of $B$, in
contrast to the $B$ sequential passes required by auto-regressive (AR)
drafters. Therefore, the block can be long without inflating $T_{\mathrm{draft}}$. Unlike AR-based
drafters, diffusion-based drafters predict all positions independently and
in parallel, so $q_t(x_t \mid c) = \mathrm{softmax}(\ell_t)$ is a marginal
that conditions on $c$ but not on $x_{<t}$, rather than the target model's
auto-regressive conditional $p(x_t \mid c, x_{<t})$. The drafter's joint
proposal is therefore the factorized distribution
$\prod_{t=1}^{B} q_t(x_t \mid c)$, whereas the target it is verified
against, $\prod_{t=1}^{B} p(x_t \mid c, x_{<t})$, is path-conditioned.
Verification walks the block left to right and stops at the first token
inconsistent with the realized prefix. This is the inherent mismatch
between drafting and verification in diffusion-based SD
methods~\cite{presto2026}, which limits the achievable acceptance length.

\subsection{Jacobi decoding}
\label{sec:bg-jacobi}

Autoregressive decoding produces a block of $B$ tokens in $B$ dependent steps:
$y_t = \arg\max_y p(y \mid c, y_{<t})$ for $t = 1, \dots, B$. Jacobi
decoding~\cite{jacobi2021,santilli2023} reformulates this as solving a system
of $B$ equations in $B$ unknowns, $y_t - \arg\max_y p(y \mid c, y_{<t}) = 0$,
by fixed-point iteration: from an initial guess $y^{(0)}$, all positions are
updated simultaneously,
\begin{equation}
  y_t^{(k+1)} \;=\; \arg\max_y \, p\!\left(y \mid c,\, y_{<t}^{(k)}\right),
  \qquad t = 1, \dots, B,
  \label{eq:jacobi}
\end{equation}
which costs one parallel forward pass of the model per iteration (a causal
attention mask evaluates all $B$ conditionals at once). The iteration stops
when $y^{(k)} = y^{(k-1)}$. Because the dependency structure is strictly
causal, position $1$ is exact after one iteration, position $2$ after two, and
the fixed point, which can be reached in at most $B$ iterations, is provably identical to
the greedy autoregressive output~\cite{jacobi2021}. In practice, however, Jacobi decoding applied directly to a language model yields little speedup: from a cold-start initialization, the iteration can settle more than one token per pass~\cite{santilli2023}. This also motivates training methods that explicitly further shorten the Jacobi
trajectory, such as consistency LLMs (CLLMs)~\cite{cllm2024}. CLLMs add a consistency loss that, from any intermediate state on a Jacobi trajectory, pulls the model's per-position predictions directly toward the trajectory's fixed point, so that many tokens collapse to their converged values in fewer passes. 

\section{\xpresto: Parallel Refinement for Diffusion Drafters}

In this section, we first introduce the designed lightweight causal refiner that refines the diffusion drafter's per-position marginals under intra-block causal dependencies(\Cref{sec:refiner}). We then elaborate how to resolve the refiner's causal dependency in parallel via Jacobi decoding, so a block converges in a few joint iterations instead of a serial left-to-right pass (\Cref{sec:jacobi}). Furthermore, we describe how the refiner is co-trained with the drafter under an acceptance-oriented objective, together with a consistency term that aligns training with the self-conditioned inputs seen during the refinement process (\Cref{sec:training}).

\begin{figure}[th]
  \centering
  \includegraphics[width=\linewidth]{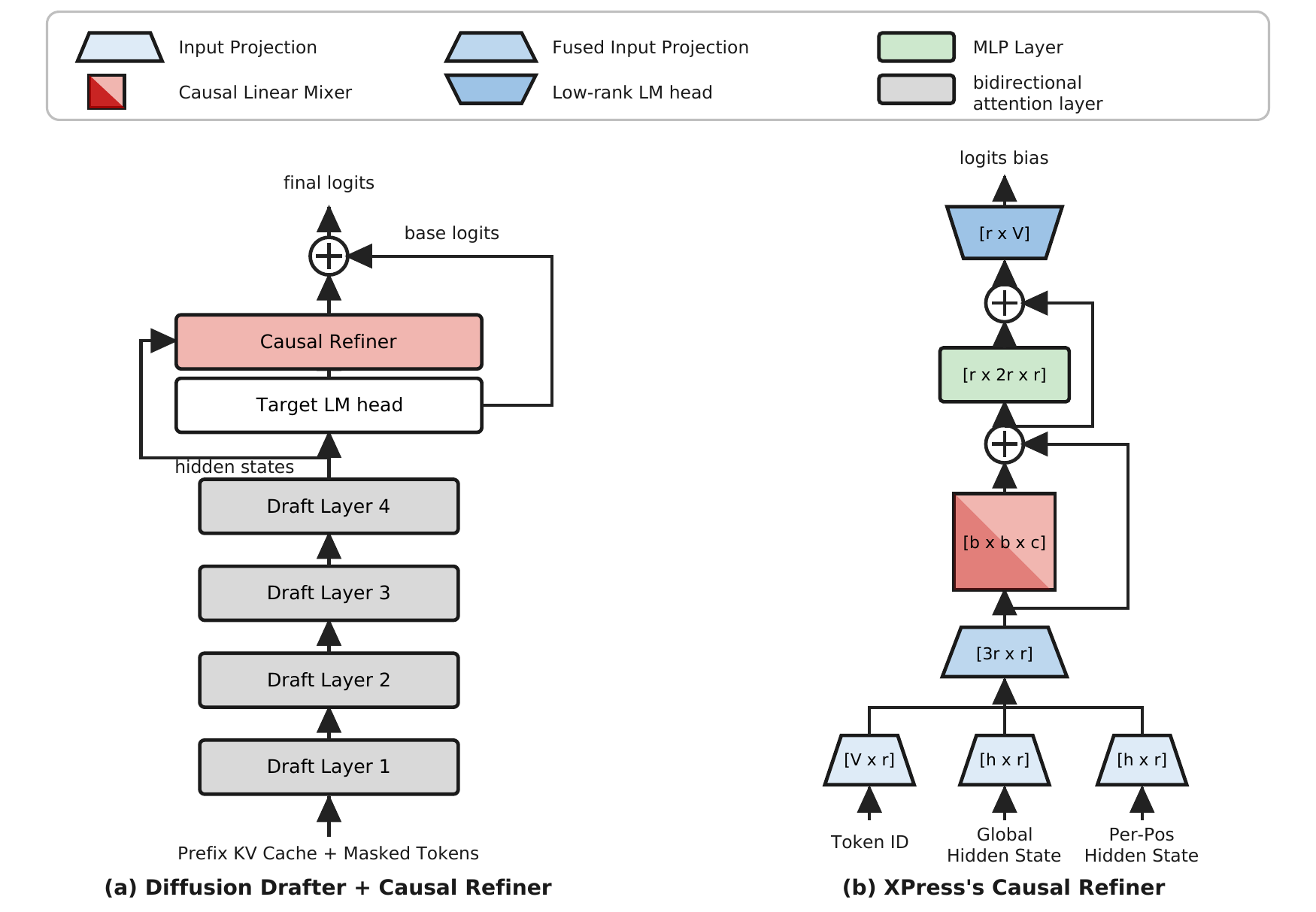}
  \caption{\textbf{(a)} The full pipeline. The block-diffusion drafter produces hidden states, the target LM head reads out the base logits, and the refiner adds a learned logits bias to form the final logits. \textbf{(b)} Inside the refiner. The three inputs, the token id, the global hidden state, and the per-position hidden state, are down-projected into $r$-space, fused, mixed causally across block positions, passed through the $r$-space MLP, and read back out to vocabulary by the shared low-rank head.}
  \label{fig:arch}
\end{figure}

\subsection{A lightweight causal refiner}
\label{sec:refiner}
We first formulate the correction process as a causal refinement with four properties.
First, it should be \textbf{lightweight}: the drafter's single parallel forward
pass is already highly streamlined and performant, leaving little room for new
parameters or architectural complexity, and this existing capability should be
preserved. Second, within that small resource budget it should be
\textbf{causal}, injecting real causal information by conditioning each token on
its discretely sampled predecessors rather than merely smoothing the drafter's
marginals locally. Third, it should be \textbf{drafter-grounded}, making good
use of what the diffusion drafter already computes, such as its hidden states,
which typically carry richer information about the block than the pure token
id~\cite{truth2026}. Finally, it should incur \textbf{low overhead}: refinement
cost at inference should stay a small fraction of the total drafting time, and
in particular must avoid reintroducing a fully serial, left-to-right pass.

We design a lightweight causal refiner instantiating all four properties.
\Cref{fig:arch}(a) shows the full pipeline: the diffusion drafter proposes the
initial block in one pass, the target LM head reads out the base logits $s_k$,
and the refiner adds a learned correction on top. Consistent with the
aforementioned top-$k$ observation, the refiner does not score the vocabulary
from scratch. It adds a small per-position logit bias $\delta_k$ that re-ranks
the handful of candidates the drafter already favours. Formally, the refiner is
a map
\begin{equation}
  \mathcal{R}_\theta:\ \big(h_{1:B},\, x_{0:B-1}\big)\ \longmapsto\ (\delta_1,\dots,\delta_B)\in\R^{B\times V},
\end{equation}
where $h_{1:B}$ are the drafter's block hidden states and $x_{0:B-1}$ the
current token guesses ($x_0$ a fixed anchor). Composed with the base logits, it
induces a per-position refined distribution
\begin{equation}
  p^{r}_k(\cdot \mid x_{<k}, h_{1:B}) \;=\; \mathrm{softmax}\big(s_k + \delta_k\big),
  \qquad \delta_k = \mathcal{R}_\theta(h_{1:B}, x_{0:B-1})_k .
  \label{eq:refined-dist}
\end{equation}

\Cref{fig:arch}(b) details $\mathcal{R}_\theta$. Let $V$ be the vocabulary size,
$H$ the drafter's hidden width, $B$ the block length, and $r{=}256$ the low-rank
dimension. Using a column-vector convention, the
refiner is the composition
$\mathcal{R}_\theta=\mathrm{Read}\circ\mathrm{Trans}\circ\mathrm{Mix}\circ\mathrm{Fuse}$
of four typed stages. Write $A=[a_1,\dots,a_B]\in\R^{r\times B}$ and let
$a^{(d)}\in\R^{B}$ denote its $d$-th channel across positions:
\begin{align}
  \mathrm{Fuse}:\quad
    & a_k \;=\; W_{\text{in}}\big[\,W_h h_k \,\Vert\, W_g g \,\Vert\, W_e[x_{k-1}]\,\big] \;\in\R^{r},
    \label{eq:fuse}\\[2pt]
  \mathrm{Mix}:\quad
    & c^{(d)} \;=\; \big(I_B + L^{(d)}\big)\,a^{(d)} \;\in\R^{B},
      \quad L^{(d)}\in\R^{B\times B}\ \text{strictly lower-triangular},\ \forall d,
    \label{eq:mix}\\[2pt]
  \mathrm{Trans}:\quad
    & z_k \;=\; c_k + W_2\,\sigma\!\big(W_1 c_k\big) \;\in\R^{r},
      \quad W_1\in\R^{2r\times r},\ W_2\in\R^{r\times 2r},
    \label{eq:mlp}\\[2pt]
  \mathrm{Read}:\quad
    & \delta_k \;=\; W_r\, z_k \;\in\R^{V},
    \label{eq:read}\\[2pt]
  \mathrm{Correct}:\quad
    & \ell_k \;=\; s_k + \delta_k \;\in\R^{V}.
    \label{eq:correct}
\end{align}
The three inputs are the drafter's per-position hidden state $h_k\in\R^H$, the
previous-token id $x_{k-1}$, and a block-global summary
$g=\tfrac1B\sum_{j=1}^{B} h_j\in\R^H$ obtained by mean-pooling the drafter's
hidden states over the block; $\sigma$ is a pointwise nonlinearity. On the
output side, $s_k$ is the drafter's own base logit vector for position $k$,
$\delta_k$ the learned correction, and $\ell_k$ the corrected logit the block is
re-decoded from. Among the learned maps, $W_e\in\R^{r\times V}$ is the token
embedding, $W_h, W_g\in\R^{r\times H}$ the down-projections for $h_k$ and $g$,
$W_{\text{in}}\in\R^{r\times 3r}$ the input fusion projection,
$W_r\in\R^{V\times r}$ the readout head, and $L\in\R^{r\times B\times B}$ the
per-channel causal mixer whose channel slices $L^{(d)}$ are strictly
lower-triangular. Each design choice earns back one property elaborated as follows.

\textbf{Lightweight.} Everything except the two vocabulary matrices
  $W_e, W_r$ lives in $r$-space, and these are the embedding and prediction head
  required by any logit-bias model. On top of them the refiner adds only small
  $r$-space components, so by \eqref{eq:refined-dist} it acts as an additive
  correction to $s_k$ rather than overwriting the base drafter outputs.

\textbf{Drafter-grounded.} Feeding the per-position hidden $h_k$ and the
  block-global summary $g$ into the correction gives it strictly more signal
  than a bare token id~\cite{truth2026}, and both are obtained for free from the
  drafter.

\textbf{Causal.} By \eqref{eq:mix} position $k$ mixes only over its
  prefix $j\le k$, and because $\mathrm{Mix}$ acts on features that already carry
  the prior sampled tokens through $\mathrm{Fuse}$, this is genuine causal
  conditioning rather than a local smoothing of the marginals. The refined
  distribution inherits exactly this structure:
  \begin{proposition}[Causal in the sampled tokens]
  \label{prop:causal}
  For every position $k$, the correction $\delta_k$ depends on the drafter
  features $h_{1:B}$ only through $(h_k, g)$ and on the sampled tokens only
  through the prefix $x_{<k}$. Consequently the refined block factorizes
  autoregressively in the token argument,
  \begin{equation}
    p^{r}(x_{1:B}\mid h_{1:B}) \;=\; \prod_{k=1}^{B} p^{r}_k\big(x_k \mid x_{<k},\, h_{1:B}\big),
  \end{equation}
  recovering the path-conditioned structure that the drafter's marginals
  $\prod_k q_k(x_k\mid c)$ lack.
  \end{proposition}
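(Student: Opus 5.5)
The plan is to trace the dependency structure through the four stages $\mathrm{Fuse}$, $\mathrm{Mix}$, $\mathrm{Trans}$, $\mathrm{Read}$ one at a time, and then read off the factorization from the chain rule.

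\textbf{Step 1 ($\mathrm{Fuse}$).} By \eqref{eq:fuse}, $a_j$ is a function of $(h_j, g, x_{j-1})$ only. Here $g=\tfrac1B\sum_j h_j$ is shared by every position.

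\textbf{Step 2 ($\mathrm{Mix}$).} By \eqref{eq:mix}, each channel satisfies $c^{(d)}_k = a^{(d)}_k + \sum_{j<k} L^{(d)}_{kj} a^{(d)}_j$, because $L^{(d)}$ is strictly lower-triangular. Hence $c_k$ depends only on $a_1,\dots,a_k$. Combining with Step 1, it depends on the tokens only through $x_0,\dots,x_{k-1}$. Since $x_0$ is a fixed anchor, this is exactly the prefix $x_{<k}$.

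\textbf{Step 3 ($\mathrm{Trans}$ and $\mathrm{Read}$).} Both \eqref{eq:mlp} and \eqref{eq:read} act position-wise: $z_k$ is a function of $c_k$ alone, and $\delta_k$ of $z_k$ alone. So they introduce no new cross-position dependence, and the token part of the claim follows. I will make this precise by showing that changing any $x_j$ with $j\ge k$ leaves $a_1,\dots,a_k$ unchanged. Therefore $c_k$, $z_k$ and $\delta_k$ are unchanged.

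\textbf{Step 4 (factorization).} By Step 3, and because $s_k$ does not involve tokens at all, each factor $p^r_k(\cdot\mid x_{<k},h_{1:B})=\mathrm{softmax}(s_k+\delta_k)$ is a normalized distribution over $x_k$ whose parameters are fixed once $x_{<k}$ and $h_{1:B}$ are fixed. So the product $\prod_k p^r_k(x_k\mid x_{<k},h_{1:B})$ is a valid joint distribution, which is the one induced by sequential (or Jacobi-fixed-point) sampling. Summing out $x_B$, then $x_{B-1}$, and so on, shows it normalizes to one and has the stated conditionals. This is the chain rule read in reverse. It contrasts with $\prod_k q_k(x_k\mid c)$, whose factors ignore $x_{<k}$.

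\textbf{Main obstacle: the claim about the drafter features.} The statement that $\delta_k$ sees $h_{1:B}$ only through $(h_k,g)$ is where I expect trouble, and I do not think the proof can be made to go through as stated. Through $\mathrm{Mix}$, $c_k$ also contains $W_{\text{in}}W_h h_j$ for $j<k$, so $\delta_k$ generally depends on $h_{1:k}$ and $g$. My first move will be to state and prove the corrected version: $\delta_k$ depends on the features only through $(h_{\le k}, g)$. The strict form would hold only if the $W_h$ block of $W_{\text{in}}$ were routed around the mixer, which the architecture does not do.

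This discrepancy is harmless for the factorization. All of $h_{1:B}$ is conditioned on and is deterministic given the drafter pass, so only the token dependence matters for Step 4.
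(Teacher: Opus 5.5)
Your argument matches the paper's: $\mathrm{Fuse}$ reads only $x_{k-1}$ and the strictly lower-triangular $\mathrm{Mix}$ aggregates only $a_j$ with $j\le k$, so $\delta_k$ is a function of $x_{0:k-1}$ alone; your position-wise treatment of $\mathrm{Trans}$/$\mathrm{Read}$ and your chain-rule reading of the product spell out what the paper calls immediate. Your objection to the feature clause is also correct, since the paper's proof never argues it, and its own folded form $c_k=\sum_{j\le k}\mathcal{M}_{k,j}\tilde a_j$, in which $\tilde a_j$ contains $W_h h_j$, shows that $\delta_k$ depends on $(h_{\le k},g)$; for generic weights this is strictly more than $(h_k,g)$ once $B\ge 3$ (for $B\le 2$, $(h_k,g)$ already determines $h_{\le k}$), and, as you say, it does not affect the factorization.
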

  \noindent The claim is immediate from the dependency structure of
  \eqref{eq:fuse}--\eqref{eq:mix}: $\mathrm{Fuse}$ reads only $x_{k-1}$, and
  $\mathrm{Mix}$ aggregates $a_j$ for $j\le k$, each depending only on $x_{j-1}$;
  hence $\delta_k$ is a function of $x_{0:k-1}$ alone. The mixer is also lighter
  than a conventional attention layer, being a fixed triangular combination
  rather than a computed attention score, yet remains expressive:
  \begin{remark}
  If $L^{(d)}_{k,j}$ depends only on the offset $k-j$ (Toeplitz), $\mathrm{Mix}$
  reduces to a causal depthwise conv1d of kernel size $B$; allowing it to depend
  on $(k,j)$ generalizes this to position-dependent causal mixing at the same
  $\mathcal{O}(rB^2)$ parameter cost.
  \end{remark}

\textbf{Low overhead.} $\mathrm{Mix}$ is a cheap $r$-space operation and
  $\mathrm{Trans}$ adds only an $r\!\to\!2r\!\to\!r$ residual MLP. Crucially,
  $\mathrm{Fuse}$ and $\mathrm{Mix}$ are both linear, so their composition folds
  into a single operator at inference: with
  $\tilde a_j = \big[W_h h_j \,\Vert\, W_g g \,\Vert\, W_e[x_{j-1}]\big]$ we have
  $c_k = \sum_{j\le k}\mathcal{M}_{k,j}\,\tilde a_j$, where
  $\mathcal{M}_{k,j} = \big(\mathbb{1}[j{=}k] + \mathbb{1}[j{\le}k]\,L^{(\cdot)}_{k,j}\big)\!\odot\!W_{\text{in}}$
  is precomputed once, so the fuse--mix path costs a single matmul per iteration.

\subsection{Parallel refinement via Jacobi decoding}
\label{sec:jacobi}
The causal mixer restricts the visibility of every position to its own prefix ($j\le k$), so the refiner naturally supports autoregressive generation. However, this left-to-right generation process pays an additional $B{-}1$-step loop cost over a block of $B$ draft tokens. Instead of finalizing one position before moving to the next, \xpresto{} updates all positions at once and repeats this a few times, correcting any prior mistakes via Jacobi decoding.

\begin{algorithm}[h]
\caption{Parallel refinement via Jacobi decoding}
\label{alg:jacobi}
\begin{algorithmic}[1]
\Require drafter hidden states $\{h_k\}_{k=1}^{B}$, global summary $g$,
         base logits $\{s_k\}_{k=1}^{B}$, max iterations $K$
\Ensure  refined block $Y$
\State $Y^{(0)} \gets \big(\arg\max_v s_{k,v}\big)_{k=1}^{B}$
       \Comment{seed from the drafter's one-shot predictions}
\For{$j = 0, 1, \dots, K-1$}
  \State $(\delta_1,\dots,\delta_B) \gets \textsc{Refiner}\big(Y^{(j)}, \{h_k\}, g\big)$
         \Comment{one parallel forward; $k$ sees only prefix $\le k$}
  \For{$k = 1, \dots, B$ \textbf{in parallel}}
    \State $y_k^{(j+1)} \gets \arg\max_v\, \big(s_k + \delta_k\big)_v$
  \EndFor
  \If{$Y^{(j+1)} = Y^{(j)}$}
     \State \textbf{break} \Comment{fixed point $=$ sequential decode}
  \EndIf
\EndFor
\State \Return $Y^{(j+1)}$
\end{algorithmic}
\end{algorithm}

Formally, sequential greedy decoding computes $y_k = \arg\max_v p_k(v \mid y_{<k}, h, g)$ in order, whereas Jacobi decoding solves the same equations from a seed $Y^{(0)}$ by updating every position at once,
\begin{equation}
\label{eq:jacobi-update}
  y_k^{(j+1)} = \arg\max_v\; p_k\big(v \mid y^{(j)}_{<k},\, h, g\big), \qquad k=1,\dots,B.
\end{equation}

Each iteration updates every position from the block as it currently stands, and a position stops changing once the tokens before it have also halted. The detailed refinement process is shown in Algorithm~\ref{alg:jacobi}. The key structural fact enabling convergence of Jacobi decoding here is that the drafter features
$h,g$ are computed once and held fixed across iterations, so
\eqref{eq:jacobi-update} is a fixed-point iteration on the finite block
$V^{B}$ whose only per-position dependence is on the prefix $y_{<k}$. This makes
the outcome exact and cheap to bound.

\begin{theorem}[Finite convergence to the sequential decode]
\label{thm:jacobi}
Assume the per-position $\arg\max$ in \eqref{eq:jacobi-update} is unique (ties
broken by a fixed rule) and that $h,g$ are constant across iterations. Let
$Y^\star=(y_1^\star,\dots,y_B^\star)$ be the sequential greedy decode,
$y_k^\star=\arg\max_v p_k(v\mid y_{<k}^\star,h,g)$. Then from \emph{any} seed
$Y^{(0)}$, the Jacobi iteration \eqref{eq:jacobi-update} satisfies
\begin{equation}
  y_k^{(j)} = y_k^\star \quad\text{for all } j \ge k,
  \label{eq:monotone}
\end{equation}
so it reaches $Y^\star$ in at most $B$ iterations. Moreover $Y^\star$ is the
unique fixed point of \eqref{eq:jacobi-update}.
\end{theorem}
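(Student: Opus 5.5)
The argument is an induction on the position index $k$, resting on Proposition~\ref{prop:causal}. The proposition gives that $\delta_k$, and hence the whole refined distribution $p_k(\cdot\mid\cdot,h,g)$, depends on the current block only through the prefix $y_{<k}$. With $h,g$ fixed across iterations, this lets us write each coordinate update in \eqref{eq:jacobi-update} as $y_k^{(j+1)}=F_k(y^{(j)}_{<k})$. Here $F_k$ is a deterministic map: the argmax with the fixed tie-breaking rule. It does not depend on $j$ or on $y^{(j)}_{\ge k}$. The sequential decode is then characterized by $y_k^\star=F_k(y^\star_{<k})$ for all $k$.

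Next I would prove \eqref{eq:monotone} by strong induction on $k$. For $k=1$, the prefix is empty (only the fixed anchor $x_0$). Therefore $F_1$ is a constant, and $y_1^{(j)}=F_1=y_1^\star$ for every $j\ge 1$, whatever $Y^{(0)}$ was. For the inductive step, assume $y_i^{(j)}=y_i^\star$ for all $i<k$ and all $j\ge i$. Take any $j\ge k$. Then $j-1\ge k-1\ge i$ for every $i<k$, so $y^{(j-1)}_{<k}=y^\star_{<k}$. Hence $y_k^{(j)}=F_k(y^{(j-1)}_{<k})=F_k(y^\star_{<k})=y_k^\star$. Taking $j=B$ shows $Y^{(B)}=Y^\star$. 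Moreover $Y^{(j)}=Y^\star$ for all $j\ge B$, so the iteration reaches $Y^\star$ within $B$ steps. The early-stopping test in Algorithm~\ref{alg:jacobi} is consistent with this, by the fixed-point characterization below.

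For the fixed-point claim, $Y^\star$ is a fixed point directly from its defining equations. For uniqueness, let $\bar Y$ be any fixed point. Seed the iteration with $Y^{(0)}=\bar Y$. Every iterate then equals $\bar Y$, yet by the first part $Y^{(B)}=Y^\star$, so $\bar Y=Y^\star$. A direct induction gives the same conclusion: $\bar y_1=F_1=y_1^\star$, and then each $\bar y_k=F_k(\bar y_{<k})=F_k(y^\star_{<k})=y^\star_k$.

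I expect the main obstacle to be the first step, not the induction. The induction is mechanical once each update is a well-defined function of the strict prefix alone. The care needed is in checking that nothing in $\delta_k$ leaks information from positions $\ge k$. Concretely, three things must be verified. First, $\mathrm{Fuse}$ at position $j$ reads only $x_{j-1}$. Second, the global summary $g$ is built from hidden states and not tokens, so it is frozen. Third, the strictly lower-triangular $L^{(d)}$ plus the identity term mixes only $a_j$ with $j\le k$, and these carry tokens $x_{0:k-1}$. Proposition~\ref{prop:causal} already packages these facts. Beyond that, I need to state explicitly that the tie-breaking rule makes $F_k$ single-valued, since the uniqueness and exactness claims fail otherwise.
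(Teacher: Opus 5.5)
Your proof is correct and follows the paper's argument. Both use Proposition~\ref{prop:causal} to make each coordinate update a fixed function of the strict prefix, then induct over the index pairs $(k,j)$ with $j\ge k$: you induct on the position $k$, while the paper inducts on the iteration $j$ with the invariant ``$y_k^{(j)}=y_k^\star$ for all $k\le j$'', which is the same statement reindexed. Your direct-induction uniqueness argument is the paper's. Your alternative, seeding the iteration at a fixed point $\bar Y$ and using $Y^{(B)}=Y^\star$, is an equally valid shortcut.
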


\begin{proof}
By the causal structure of the refiner (\Cref{prop:causal}), each
$p_k(\cdot\mid y_{<k},h,g)$ depends on the token guesses only through the prefix
$y_{<k}$; combined with $h,g$ being fixed, the update at position $k$ is a
function of $y_{<k}$ alone. We now prove the invariant ``$y_k^{(j)}=y_k^\star$ for all $k\le j$'' by induction
on $j$. The base case $j{=}0$ is vacuous. Assume it holds after iteration $j$.
Fix any $k\le j{+}1$; then $k{-}1\le j$, so by the hypothesis
$y_{<k}^{(j)}=y_{<k}^\star$. Since $h,g$ are fixed, the update gives
\[
  y_k^{(j+1)}
  =\arg\max_v p_k\!\big(v\mid y_{<k}^{(j)},h,g\big)
  =\arg\max_v p_k\!\big(v\mid y_{<k}^\star,h,g\big)
  =y_k^\star,
\]
establishing the invariant for $j{+}1$ and hence \eqref{eq:monotone}. Taking
$j{=}B$ gives $Y^{(B)}=Y^\star$. For uniqueness, any fixed point $Y$ obeys
$y_k=\arg\max_v p_k(v\mid y_{<k},h,g)$ for all $k$; induction on $k$ (position
$1$ depends only on the anchor and constants, then each $y_k$ is determined by
the already-pinned prefix) forces $Y=Y^\star$.
\end{proof}

Intuitively, each iteration pins one more leading position: once a prefix of
length $n$ is correct it never changes, and it makes position $n{+}1$ correct on
the next pass, so stability spreads rightward from the anchor. Theorem~\ref{thm:jacobi}
is thus the worst case; in practice the drafter's seed is already a good guess,
so many positions settle at once and $K$ iterations lock in far more than $K$
tokens. We find $K\approx 6$ iterations suffice to match the acceptance length
of a $15$-step sequential decode.

\subsection{Training \xpresto}
\label{sec:training}
The causal refiner is co-trained with the (co-adapted) drafter, using ground-truth token sequences and the predictions of the frozen target model. Both provide useful training signal, that we incorporate into two separate loss terms. 
The first is a teacher-forced cross-entropy against the ground truth token sequence. Conditioning each position on the ground-truth prefix $y_{<k}$, it maximizes the probability of the correct token $x_k^*$. This is the standard next-token training objective that instills language capability into the refiner. But it optimizes the data likelihood, whereas what sets the speedup of SD is the acceptance rate. Under speculative sampling, the probability that a token drawn from vocabulary distribution $p$ is accepted against the target model distribution $p^t$ can be expressed as $\sum_x \min\!\big(p(x),\,p^t(x)\big) = 1 - \mathrm{TV}(p, p^t)$, with $\mathrm{TV} = \tfrac12\lVert p - p^t\rVert_1$ ~\cite{leviathan2023}. So
\begin{equation}
  \lVert p_k - p_k^t \rVert_1 \;=\; 2\,\mathrm{TV} \;=\; 2\,(1 - \text{accept rate}_k),
\end{equation}
and minimizing this total-variation distance to the target is exactly maximizing acceptance, so we make it the second loss term in our training objective. The two are complementary: the cross-entropy points the refiner at the right token, while the total-variation term shapes the distribution to the target the way acceptance is scored. The per-position loss is their weighted sum, with $w_k=\exp(-(k{-}1)/\gamma)$ emphasizing earlier positions \cite{dflash2026}, since an inference-time verification mismatch disqualifies not just that position but all following positions in the draft:
\begin{equation}
  \mathcal{L}(p) = \sum_{k} w_k \Big[ \alpha_{\text{ce}}\big(\!-\!\log p_k(x_k^*)\big) + \alpha_{\ell_1}\lVert p_k - p_k^t \rVert_1 \Big].
\end{equation}

During a forward pass, the drafter produces a base distribution $p^b$, which the refiner then uses to produce the refined distribution $p^r$. A naive application of our loss to $p^r$ yields a performant refiner, and a drafter co-adapted to its behavior. Yet this can be problematic at inference time: the Jacobi iteration begins from the drafter's predictions, so if $p^b$ drifts from the target $p^t$, the refiner is starting from a worse seed. We cannot differentiate through the token-sampling operation in the drafter that captures this dynamic, so we instead add an auxiliary loss on $p_b$, anchoring it to desired behavior. 

An additional concern is the fact that minimizing the loss on $p^r$ in a teacher-forced setting yields a refiner that is good at correcting gold prefixes, yet untested on the self-conditioned inputs it actually receives in the Jacobi decoding process. Therefore, we also introduce a consistency loss to solve this misalignment between training and inference stages. Refiner forward passes are cheap by design, so during training we run a second forward pass whose token inputs come from the drafter's $\text{argmax}(p^b)$, and apply our two-term loss to that output $\hat p^r$ as well.

The full objective thus applies $\mathcal{L}$ to three distributions:
\begin{equation}
  \mathcal{L}_{\text{total}} = \underbrace{\mathcal{L}(p^{r})}_{\text{refiner}} + \lambda\,\underbrace{\mathcal{L}(p^{b})}_{\text{drafter anchor}} + \beta\,\underbrace{\mathcal{L}(\hat p^{r})}_{\text{consistency}}.
\end{equation}
The first loss optimizes the refiner under teacher forcing, the second drafter-anchor loss preserves the quality of the diffusion drafter itself, and the final consistency loss allows the refiner to better operate under inference conditions.


\section{Experiments}
\subsection{Evaluation Settings}
\paragraph{Models and Datasets.}
We evaluate \xpresto{} on Qwen3-8B~\cite{yang2025qwen3technicalreport} as the target model, using
its non-thinking mode throughout for efficient decoding, with the released
block-16 dFlash drafter~\cite{dflash2026} as the base diffusion drafter. For
training data we use Open-PerfectBlend~\cite{xu2024perfectblendredefiningrlhf}, an open
instruction mixture spanning math, coding, and chat. Following the dFlash
recipe, assistant responses are regenerated by the target model under its chat
template in non-thinking mode, so the refiner is trained on the distribution it
must correct at inference. We evaluate on math benchmarks including
GSM8K~\cite{cobbe2021trainingverifierssolvemath}, MATH-500~\cite{lightman2023letsverifystepstep}, and AIME25~\cite{MAA_AIME_nd_a}; coding benchmarks
including HumanEval~\cite{chen2021evaluatinglargelanguagemodels}, MBPP~\cite{austin2021programsynthesislargelanguage}, and
LiveCodeBench~\cite{jain2024livecodebenchholisticcontaminationfree}; and open-ended conversational tasks including
MT-Bench~\cite{zheng2023judgingllmasajudgemtbenchchatbot}. None of the evaluation sets overlap the training
corpus.

\paragraph{Baselines and Training Settings.}
We compare against the plain dFlash drafter~\cite{dflash2026} and DSpark's
sequential Markov head~\cite{dspark2026}. The dFlash baseline measures the drafter as
released, with no correction head. We report two
metrics: the \emph{acceptance length} $\tau$, the average number of tokens
committed per target verification; \emph{end-to-end decoding throughput speedup}, the ratio of speculative throughput to the autoregressive baseline of the same target. All heads are co-trained with the drafter against the frozen target with AdamW, learning rate $6\times10^{-4}$. The diffusion block size is set to 16. The consistency loss on the drafter-seeded second pass is
weighted by $\lambda{=}0.3$. The anchor term that holds the base drafter to
the target is annealed from $\beta{=}0.6$ to a floor of $0.2$ over training,
shifting capacity toward the refiner once the drafter has adapted. For a controlled head-to-head comparison,
we train the Markov head under the identical setup as \xpresto{}: the
same co-training recipe, the same data, the same loss schedule, and the same step budget, so
the two systems differ only in the correction architecture.

\paragraph{Implementation Details.} For the single-batch evaluation, we use HuggingFace-based harness with
\texttt{torch.compile} on all modules and CUDA-graph capture of the correction
rollout. For serving-engine evaluation, we integrate \xpresto{} into
vLLM's speculative-decoding stack \cite{kwon2023efficientmemorymanagementlarge} to evaluate \xpresto{} performance on large batch sizes. Because GPU clocks on a shared cluster drift by several
percent, every throughput number is the mean of five measured rounds. For evaluation, we use a single H200. For the training process, we use 32 H100 GPUs to train each setting. 

\subsection{Results}

\begin{table*}[!t]
\centering
\small

\setlength{\tabcolsep}{2pt}
\caption{Decoding speedup over the autoregressive baseline (Sp.) and average
acceptance length ($\tau$) on Qwen3-8B with a dFlash block-16 drafter,
single-sequence decoding, at most 2048 generated tokens. At $T{=}1$, $\tau$ is averaged over 5 seeds.
\textbf{$\times$ Gain} is \xpresto{}'s throughput over the plain dFlash diffusion drafter.}
\label{tab:main-results}
\resizebox{\linewidth}{!}{%
\begin{tabular}{
l
@{\hspace{0.3em}} cc cc cc
@{\hspace{0.3em}} cc cc cc
@{\hspace{0.3em}} cc
@{\hspace{0.3em}} cc
}
\toprule
\multirow{2}{*}{Method}
& \multicolumn{6}{c@{\hspace{1.2em}}}{\sc{Math}}
& \multicolumn{6}{c@{\hspace{1.2em}}}{\sc{Code}}
& \multicolumn{2}{c@{\hspace{1.2em}}}{\sc{Chat}}
& \multicolumn{2}{c}{\textit{Avg.}} \\
\cmidrule(lr){2-7} \cmidrule(lr){8-13} \cmidrule(lr){14-15} \cmidrule(lr){16-17}
& \multicolumn{2}{c}{GSM8K}
& \multicolumn{2}{c}{MATH-500}
& \multicolumn{2}{c@{\hspace{1.2em}}}{AIME25}
& \multicolumn{2}{c}{HumanEval}
& \multicolumn{2}{c}{MBPP}
& \multicolumn{2}{c@{\hspace{1.2em}}}{LCB}
& \multicolumn{2}{c@{\hspace{1.2em}}}{MT-Bench}
& \multicolumn{2}{c}{\textit{Avg.}} \\
\midrule
\multicolumn{1}{c}{Temperature = 0}
& Sp. & $\tau$ & Sp. & $\tau$ & Sp. & $\tau$
& Sp. & $\tau$ & Sp. & $\tau$ & Sp. & $\tau$
& Sp. & $\tau$
& Sp. & $\tau$ \\
\midrule
dFlash
& 4.8$\times$ & 6.48 & 6.8$\times$ & 7.71 & 5.1$\times$ & 7.10
& 5.0$\times$ & 6.44 & 4.7$\times$ & 5.75 & 5.2$\times$ & 7.11
& 2.6$\times$ & 3.18
& 4.9$\times$ & 6.25 \\
Markov head
& 7.8$\times$ & 9.67 & 7.2$\times$ & 9.24 & 5.4$\times$ & 7.95
& 6.3$\times$ & 7.76 & 5.7$\times$ & 6.90 & 5.7$\times$ & 7.85
& 3.3$\times$ & 4.13
& 6.1$\times$ & 7.64 \\
\xpresto{} (ours)
& \textbf{8.2$\times$} & \textbf{10.11} & \textbf{7.5$\times$} & \textbf{9.62} & \textbf{5.8$\times$} & \textbf{8.35}
& \textbf{6.6$\times$} & \textbf{8.15} & \textbf{5.9$\times$} & \textbf{7.11} & \textbf{6.1$\times$} & \textbf{8.40}
& \textbf{3.5$\times$} & \textbf{4.38}
& \textbf{6.2$\times$} & \textbf{8.02} \\
\cmidrule(lr){1-17}
\textbf{$\times$ Gain}
& \textbf{1.70$\times$} &
& \textbf{1.10$\times$} &
& \textbf{1.12$\times$} &
& \textbf{1.32$\times$} &
& \textbf{1.25$\times$} &
& \textbf{1.17$\times$} &
& \textbf{1.36$\times$} &
& \textbf{1.29$\times$} & \\
\midrule
\multicolumn{1}{c}{Temperature = 1}
& Sp. & $\tau$ & Sp. & $\tau$ & Sp. & $\tau$
& Sp. & $\tau$ & Sp. & $\tau$ & Sp. & $\tau$
& Sp. & $\tau$
& Sp. & $\tau$ \\
\midrule
dFlash
& 4.5$\times$ & 5.83 & 3.8$\times$ & 5.71 & 3.1$\times$ & 4.35
& 3.9$\times$ & 5.40 & 3.5$\times$ & 4.83 & 3.9$\times$ & 5.58
& 2.3$\times$ & 2.90
& 3.6$\times$ & 4.94 \\
Markov head
& 6.4$\times$ & 8.76 & 5.4$\times$ & 7.60 & 4.2$\times$ & 6.22
& 5.1$\times$ & 6.99 & 4.7$\times$ & 6.39 & 4.4$\times$ & 6.43
& 3.0$\times$ & 3.97
& 4.7$\times$ & 6.62 \\
\xpresto{} (ours)
& \textbf{7.2$\times$} & \textbf{9.20} & \textbf{6.0$\times$} & \textbf{8.12} & \textbf{4.6$\times$} & \textbf{6.68}
& \textbf{5.7$\times$} & \textbf{7.36} & \textbf{5.3$\times$} & \textbf{6.68} & \textbf{4.8$\times$} & \textbf{6.65}
& \textbf{3.2$\times$} & \textbf{4.13}
& \textbf{5.3$\times$} & \textbf{6.97} \\
\cmidrule(lr){1-17}
\textbf{$\times$ Gain}
& \textbf{1.60$\times$} &
& \textbf{1.58$\times$} &
& \textbf{1.48$\times$} &
& \textbf{1.46$\times$} &
& \textbf{1.51$\times$} &
& \textbf{1.23$\times$} &
& \textbf{1.39$\times$} &
& \textbf{1.46$\times$} & \\

\bottomrule
\end{tabular}%
}

\vspace{1.5em}   

\setlength{\tabcolsep}{5pt}
\caption{Serving-engine results on vLLM (FlashAttention-3 attention backend, Qwen3-8B target, greedy decoding, 1024
generated tokens): throughput (tok/s) with speedup over the autoregressive
baseline below. $\tau$ varies by ${<}1\%$ across batch sizes, and we report its mean.}
\label{tab:vllm-batch}
\begin{tabular}{c l ccccc}
\toprule
& Method & GSM8K & MATH-500 & HumanEval & MBPP & MT-Bench \\
\midrule
\multirow{2}{*}{$\tau$}
& dFlash     & 6.41 & 7.62 & 6.40 & 5.75 & 3.20 \\
& \xpresto{} & \textbf{10.13} & \textbf{9.67} & \textbf{8.05} & \textbf{7.18} & \textbf{4.31} \\
\midrule
\multirow{3}{*}{bs = 16}
& AR   & 2794 & 2665 & 2722 & 2432 & 2415 \\
& dFlash     & \makecell{8812\\3.2$\times$} & \makecell{9756\\3.7$\times$} & \makecell{8504\\3.1$\times$} & \makecell{7706\\3.2$\times$} & \makecell{4304\\1.8$\times$} \\
& \xpresto{} & \makecell{\textbf{12049}\\\textbf{4.3$\times$}} & \makecell{\textbf{11530}\\\textbf{4.3$\times$}} & \makecell{\textbf{9516}\\\textbf{3.5$\times$}} & \makecell{\textbf{8770}\\\textbf{3.6$\times$}} & \makecell{\textbf{5337}\\\textbf{2.2$\times$}} \\
\midrule
\multirow{3}{*}{bs = 32}
& AR   & 4731 & 4455 & 4553 & 3832 & 3789 \\
& dFlash     & \makecell{9702\\2.1$\times$} & \makecell{11798\\2.6$\times$} & \makecell{9858\\2.2$\times$} & \makecell{8856\\2.3$\times$} & \makecell{4890\\1.3$\times$} \\
& \xpresto{} & \makecell{\textbf{13594}\\\textbf{2.9$\times$}} & \makecell{\textbf{13667}\\\textbf{3.1$\times$}} & \makecell{\textbf{11166}\\\textbf{2.5$\times$}} & \makecell{\textbf{9859}\\\textbf{2.6$\times$}} & \makecell{\textbf{5982}\\\textbf{1.6$\times$}} \\
\bottomrule
\end{tabular}

\end{table*}

\begin{figure}[h]
  \centering
  \includegraphics[width=\linewidth]{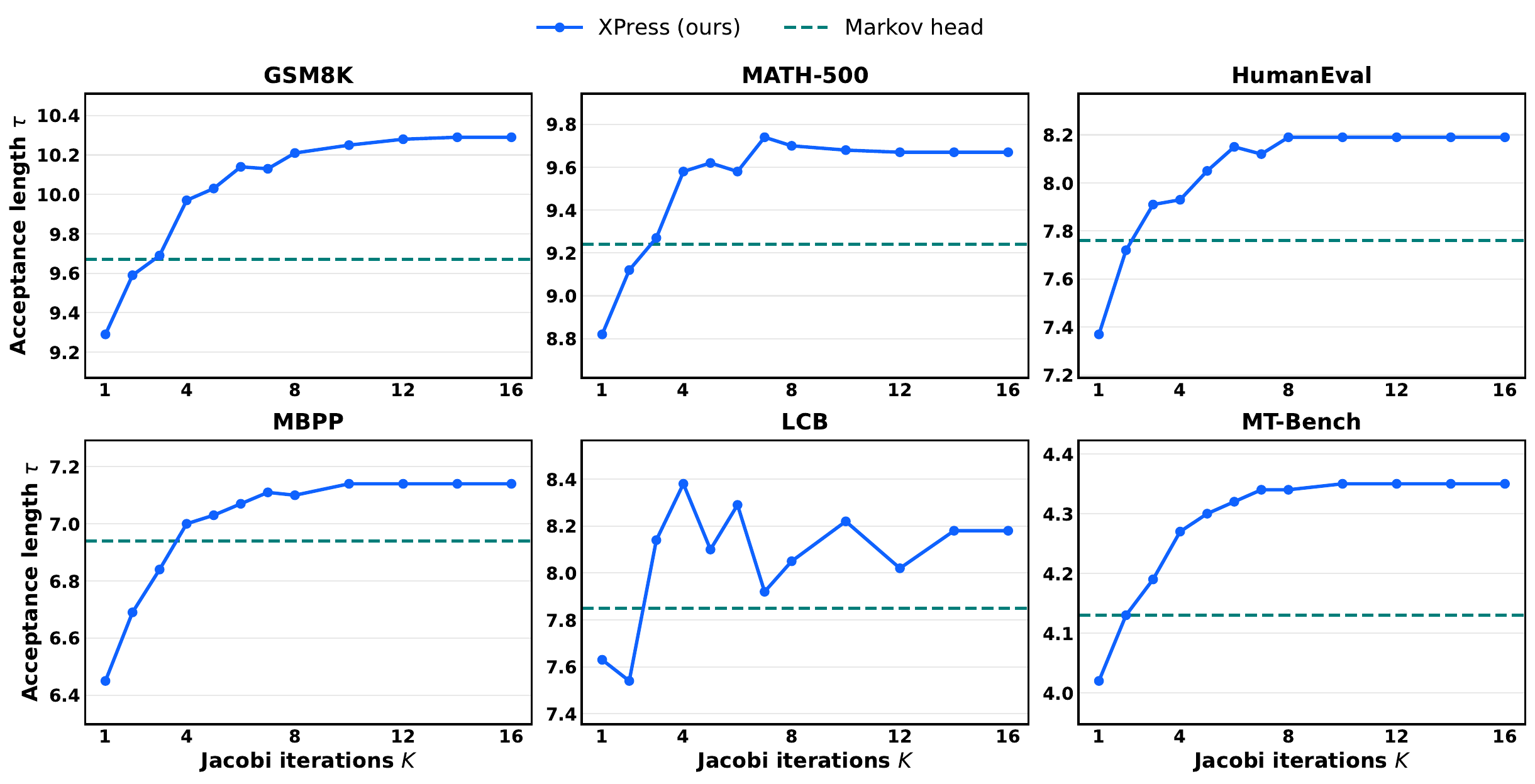}
  \caption{Per-step $\tau$ versus the number of Jacobi iterations $K$, from $K{=}1$ to $16$, with the Markov head as a horizontal baseline (dashed). $\tau$ climbs past the baseline within a few iterations, then flattens; the small non-monotone wiggles past the plateau are dataset-dependent (e.g.\ LiveCodeBench).}
  \label{fig:acck}
\end{figure}

\textbf{Single Batch.} \Cref{tab:main-results} reports decoding speedup (Sp.) and average acceptance length
($\tau$) on Qwen3-8B with a dFlash block-16 drafter across math, code, and chat
benchmarks under single batch setting. \xpresto{} attains the best average in every setting. Under greedy
decoding ($T{=}0$) it reaches a $6.2\times$ average speedup at $8.02$ accepted
tokens per step, improving over the plain dFlash drafter ($4.9\times$, $6.25$)
by $1.29\times$ on average and up to $1.70\times$ on GSM8K, while matching or
exceeding the Markov-head refiner on every benchmark and consistently so in
acceptance length (e.g.\ $8.40$ vs.\ $7.85$ on LCB). The gap widens sharply once sampling is turned on: under
$T{=}1$ the plain diffusion drafter degrades severely, its average speedup
collapsing from $4.9\times$ to $3.6\times$ as its per-position marginals diverge
from the target's path-conditioned distribution and are rejected early, whereas
\xpresto{} sustains a $5.3\times$ average speedup ($\tau{=}6.97$) for a
$1.46\times$ throughput gain over dFlash, rising to $1.24\times$ on LCB and
$1.50\times$ on AIME25. These gains hold across all three domains: chat
(MT-Bench) is the hardest regime for diffusion drafting in absolute terms, yet
\xpresto{} still delivers $1.36\times$ ($T{=}0$) and $1.39\times$ ($T{=}1$) over
dFlash, indicating the correction helps precisely where the base drafter is
weakest rather than only on easy, high-acceptance workloads.


\textbf{Large Batches on vLLMs.} \Cref{tab:vllm-batch} evaluates \xpresto{} inside vLLM's speculative-decoding
stack at serving batch sizes, where the GPU shifts from memory- to
compute-bound and the headroom for speculation necessarily shrinks. As
expected, every method's speedup over the AR baseline compresses as the batch
grows. On GSM8K, \xpresto{} moves from $4.3\times$ at $\text{bs}{=}16$ to
$2.9\times$ at $\text{bs}{=}32$, and dFlash from $3.2\times$ to $2.1\times$,
since AR throughput itself nearly doubles with batching while the speculative
methods already saturate compute. The key observation is that \xpresto{}
remains the fastest method in absolute throughput in every cell and retains a
meaningful speedup even at $\text{bs}{=}32$ ($2.5$--$3.1\times$ on math and
code), so the refiner's benefit survives into the batched serving regime rather
than vanishing under load. Moreover, its advantage over the plain dFlash
drafter is essentially batch-invariant: \xpresto{} sustains a
$1.2$--$1.4\times$ throughput edge over dFlash at both batch sizes (e.g.\ $13594$
vs.\ $9702$ tok/s on GSM8K at $\text{bs}{=}32$), because its higher acceptance
length ($\tau{=}10.13$ vs.\ $6.41$ on GSM8K) accepts more tokens per
verification and thus amortizes the expensive target forward passes that
dominate cost at scale. The trend holds across domains, with chat (MT-Bench)
again the hardest case yet still improved ($2.2\times$ and $1.6\times$ over AR
at the two batch sizes).

\paragraph{How many iterations are needed?} More Jacobi iterations lock in more of the prefix, but that is not the same as acceptance rising monotonically with $K$. When the accepted prefix already reaches past the settled region, one more iteration can overwrite a not-yet-converged tail token that happened to match the target, so $\tau$ can dip slightly. As shown in Figure~\ref{fig:acck}, we can see that in practice $\tau$ rises quickly, crosses the Markov baseline within a few iterations, and plateaus by $K\approx7$. Pushing to $K{=}16$ never beats the plateau, so a small $K$ captures essentially all of the gain.

\paragraph{Drafting-time latency.} As shown in Table \ref{tab:refine-latency}, \xpresto's cost grows with the number of Jacobi iterations: each Jacobi iteration adds about \SI{75}{\micro\second}, so the refiner runs from \textbf{\SI{150}{\micro\second}} at $K{=}1$ to \textbf{\SI{681}{\micro\second}} at $K{=}8$. The Markov head is a fixed \textbf{\SI{601}{\micro\second}}, due to its fixed 15-step serial decode regardless of $K$, so the two cross near $K{=}7$. As \Cref{fig:acck} shows, \xpresto{} never needs to run that far. It matches or beats the Markov head's per-step acceptance on every benchmark by $K{=}4$, and at $K{=}4$ the refiner costs just \textbf{\SI{379}{\micro\second}}, a $1.6\times$ speedup over the Markov head. In other words, at the first point where \xpresto{} is already more accurate, it is also markedly faster. Pushing on to the accuracy plateau at $K{=}6$ still leaves it cheaper (\SI{530}{\micro\second}, a $1.13\times$ speedup).

\begin{table}[t]
\centering
\small
\setlength{\tabcolsep}{4.5pt}
\caption{Refinement latency per block (\si{\micro\second}): \xpresto{} with $K$
parallel Jacobi passes versus the Markov head's 15-step sequential decode.}
\label{tab:refine-latency}
\begin{tabular}{l cccccccc c}
\toprule
& \multicolumn{8}{c}{\xpresto{}, $K$ Jacobi passes} & Markov head \\
\cmidrule(lr){2-9} \cmidrule(lr){10-10}
& $K{=}1$ & 2 & 3 & 4 & 5 & 6 & 7 & 8 & (15 serial steps) \\
\midrule
Latency (\si{\micro\second})
& 150 & 227 & 301 & 379 & 452 & 530 & 603 & 681 & 601 \\
\bottomrule
\end{tabular}
\end{table}

\Cref{fig:ratio} breaks the draft step into its three parts: the drafter's forward pass, the base \texttt{lm\_head} readout, and the causal refiner. The first two are shared by every head; the only difference is the refiner. 
But that refiner cost is non-trivial:
even run as a compiled unrolled loop, the Markov head's 15-step serial decode is about 24\% of the draft-side cost. 
\xpresto{} acts to minimize this slice, replacing the 15-step serial decode with a handful of Jacobi iterations.

\begin{figure}[t]
  \centering
  \includegraphics[width=0.9\linewidth]{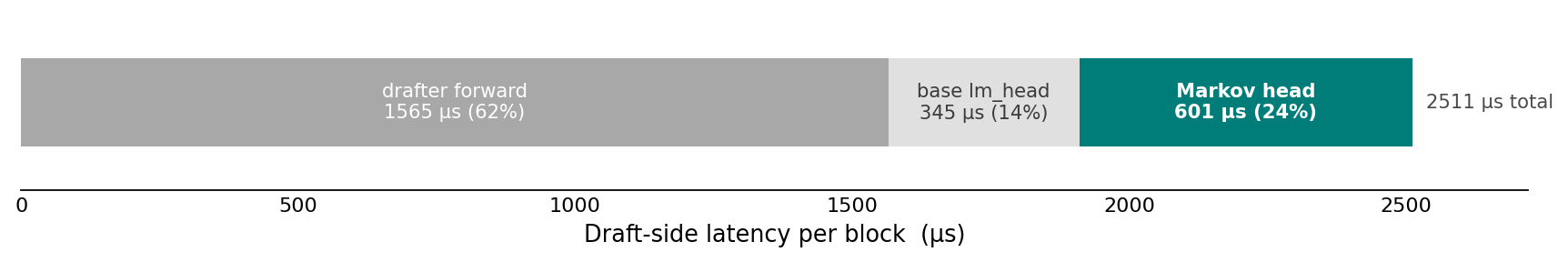}
  \caption{Composition of the draft-side latency per block for the Markov head (GSM8K, block 16, H200/sdpa). The drafter forward and the base \texttt{lm\_head} are shared across heads; the refiner is the CUDA-graphed 15-step serial decode, about a quarter of the draft cost.}
  \label{fig:ratio}
\end{figure}

\section{Conclusion}
\xpresto{} is a lightweight causal refiner for block-diffusion drafters. It reads the drafter's own hidden states and reconciles the whole block through a few parallel Jacobi iterations, restoring the token-to-token dependencies that a parallel drafter drops, without falling back to a serial, left-to-right decode. Across seven math, code, and chat benchmarks on Qwen3-8B, \xpresto{} raises the dFlash drafter's acceptance length by about \textbf{30\%} and its end-to-end throughput by about $1.3\times$. Moreover, by running four Jacobi iterations, \xpresto{} already matches or beats the Markov head's acceptance length while running about \textbf{$1.6\times$} faster.

There are several parts that could be further optimized. The iteration count $K$ is currently fixed for a whole run, but most blocks converge with a small number of iterations, so an adaptive rule that stops a block once it stops changing, or a schedule that spends iterations only where the draft is still unsettled, could cut the average $K$ with no loss in acceptance. More broadly, the philosophy introduced by \xpresto{} is not specific to speculative decoding. Reconciling a block of mutually dependent predictions in a few parallel iterations may help wherever a model emits many interdependent outputs at once.

\bibliographystyle{unsrt}
\bibliography{references}

\end{document}